\definecolor{lightgray}{gray}{0.95}
\newtheorem{fact}[theorem]{Fact}
\title{From Exact Learning to Computing Boolean Functions and Back Again}
\author{Sergiu Goschin}
\institute{Department of Computer Science\\ Rutgers University, USA\\
\email{sgoschin@cs.rutgers.edu}
}
\begin{document}

\maketitle

\begin{abstract}
The goal of the paper is to relate complexity measures associated with the evaluation of Boolean functions (certificate complexity, decision tree complexity) and learning dimensions used to characterize exact learning (teaching dimension, extended teaching dimension). The high level motivation is to discover non-trivial relations between exact learning of an unknown concept and testing whether an unknown concept is part of a concept class or not. Concretely, the goal is to provide lower and upper bounds of complexity measures for one problem type in terms of the other.
\end{abstract}

\section{Introduction}
The problem of learning a function from a concept class should be connected or easy to relate to the problem of deciding whether the function is in the function class or not.  Imagine that one searches an element in an ordered set: the time it takes to find the element in the worst case scales with the logarithm of the size of the set. But the same fact is true (in the worst case) for testing whether the element is in the set or not. Thus, in exact learning, a hypothesis space could be viewed as playing the role of a (partially) ordered set and a target function as having the role of the element that is searched or tested for membership.

Our main goal is to discuss relations between learning and computing Boolean functions in a setting where a friendly 'teacher' provides the shortest proofs to exactly identify a function in a class or to evaluate\footnote{We will use the terms 'evaluate' and 'compute' interchangeably in the paper.} it. On the learning side this protocol is known as exact learning with a teacher \cite{gol92} while on the computational side it is known as the non-deterministic decision tree model \cite{buh00}. We will focus both on the worst case versions of the complexity measures for exact learning and computation of Boolean functions and their average case counterparts (\cite{kus96}, \cite{lee07}) as it has been observed that the worst case complexity measures are sometimes unreasonably large even for simple concept classes. 

A natural way to interpret the non-deterministic decision tree model and the protocol for learning with a teacher is as best case (but non-trivial) scenarios for evaluation and learning. We are interested in these protocols as any hardness results in such settings establish a natural limit for any other evaluation or exact learning protocol. That being said, we will also investigate the aforementioned relations in a setting where the agent has the power to do queries. In this context, we will briefly study the relations between the decision tree complexity of Boolean functions (on the evaluation side) and the query complexity (usually measured using a combinatorial measure called extended teaching dimension \cite{heg95}) of exact learning with membership queries \cite{ang88}. 

{\bf Motivation} Our motivation is two-fold. From a purely theoretical perspective we think it is interesting to formally study relations between combinatorial measures like teaching dimension and certificate / decision tree complexity as such relations could provide useful tools for proving lower and upper bounds in learning theory.

From a more applied perspective, the motivation is very similar to the one connecting learning and property testing (viewed as a relaxation of the learning problem \cite{ron08}). The intuition is that evaluating whether a particular concept is part of a concept class or is 'far' from being in the concept class should be an easier problem than learning the concept accurately. Thus, if multiple hypothesis classes are candidates for being parent classes for the target concept (like in agnostic learning), it might be worth running a testing algorithm before actually learning the concept to determine which function class to use as a hypothesis space or, alternatively, which function classes to eliminate from consideration.  

{\bf Related Work.} On the learning side, since the introduction of the teaching protocol and its associated notion of teaching dimension \cite{gol92}, there were several papers that described bounds for this complexity measure for various concept classes: (monotone) monomials, (monotone) DNFs, geometrical concepts, juntas, linear threshold functions (\cite{ant95}, \cite{lee07}). One of the early observations was that sometimes, even for simple concept classes, the worst case teaching dimension was trivially large, which contradicts the intuition that 'teaching' should be relatively 'easy' for naturally occurring concept classes. There are several relatively recent attempts (\cite{bal08}, \cite{zil11}) to change the model so as to better capture this intuition by allowing the learner or the teacher to assume more about each other.

Another perspective on better capturing the overall difficulty of learning a concept class in the teaching model is to consider the average case version of the teaching dimension. The general case for a function class of size $m$ was solved by \cite{kus96} who proved that $O(\sqrt{m})$ samples are enough to learn any function class, while there exists a function class for which $\Omega(\sqrt{m})$ sample are necessary. For particular concept classes, somewhat surprisingly, the average case bounds are actually much smaller: some hypothesis classes (DNFs \cite{lee07}, LTFs \cite{ant95}) have bounds on the average teaching dimension that scale with $O(\log(m))$ while others (juntas \cite{lee07}) are even independent of $m$. 

One intuitive reason for these gaps is that the general case upper bound is actually uninformative for large concept classes (when $m$ is large---$m > |X|^2$---, a better upper bound is the trivial $|X|$ that shows the learner all instances), whereas the proofs for particular concept classes actually take advantage of the specific structure of a class to derive meaningful upper bounds.

On the computational side, the (non-)deterministic decision tree model is relatively well understood (see \cite{buh00} for an excellent survey on the topic). Complexity measures like certificate complexity, sensitivity, block  sensitivity, decision tree complexity are used to quantify the difficulty of evaluating a Boolean function when access to the inputs is provided either by an 'all-knowing teacher' (the non-deterministic decision tree model) or via a query oracle (the deterministic decision tree model). While most of the results deal with the worst case versions of the aforementioned complexity measures, bounds for some of their average case versions appeared in the literature. Among them, we mention the results from \cite{ber96} which addresses the problem of the gap between average block sensitivity and average sensitivity of a Boolean function---a well known open problem for the worst case versions of the complexity measures.

{\bf Contributions.} The first result (Section \ref{S_Positive}) is that the teaching dimension and the certificate that a function is part of a hypothesis class (i.e. $1$-certificate complexity) play a dual role: when a class is 'easy' to teach it is 'hard' to certify its membership and vice-versa. The second contribution (Section \ref{S_Negative}) is to give lower bounds for the general case of the average non-membership certificate size. The results have several applications to learning and computing Boolean functions. Finally, we will describe structural properties of Boolean functions that point to connections between learning and computation in a setting that relates the (easier) teaching model with the (harder) query model (Section \ref{S_MQ}).

\section{Setting and Notation}

Let $\mathcal{F} = \{f_i\}_{i \in [m]}$ with $f_i : X \rightarrow \{0,1\}$ a class of $m$ Boolean functions and let $\mathcal{CF} = \{f:X\rightarrow \{0,1\} | f \notin \mathcal{F}\}$ be its complement  (we denote $[m]=\{1..m\}$). $\mathcal{F}$ itself can be seen as a Boolean function, $\mathcal{F} : 2^{X} \rightarrow \{0,1\}$, $F(f)=1$ iff $f \in \mathcal{F}$. We will usually consider $X$ to be $\{0,1\}^n$ and we will label elements $x \in X$ as instances or examples and $x^{(i)}, i \in [n]$ as the $n$ Boolean variables that describe $x$. In what follows, it is assumed that both the nature and the agent know $\mathcal{F}$, with nature choosing $f_t \in \{0,1\}^{2^n}$ in an adversarial manner while the agent is not aware of the identity of $f_t$. We will first describe the learning problem, then the computation problem and then discuss how they are related.

{\bf Learning with a Teacher.} On the learning side, we will focus on exact learning with a teacher in the loop. In this protocol, nature chooses $f_t \in \mathcal{F}$ and the learner knows $f_t$ is in $\mathcal{F}$ but is not aware of its identity. The learner receives samples (pairs of $(x,label(x))$ with $x \in X$) from a 'teacher', without knowing whether the teacher is well-intentioned or not. The goal of the learner is to uniquely identify the hidden function $f_t$ using as few samples as possible. The teacher is an optimal algorithm, aware of the identity of $f_t$, that gives the learner that most informative set of instances so that the learner uniquely identifies the target concept as fast as possible. The teacher is not allowed to make any assumptions about the learning algorithm, other than assuming it is consistent (i.e. that it maintains a hypothesis space consistent with the set of revealed samples). 

In this protocol, learning stops when the consistent hypothesis space of the learner has size $1$ and thus only contains the target hypothesis. For the purpose of this paper, the learner and the teacher are assumed to have unbounded computational power to compute updates to the hypothesis space and optimal sample sets (computational issues are treated in \cite{ser01} and \cite{gol92}). One intuitive perspective for this learning protocol is that it is the best case scenario of exact learning with membership queries \cite{ang88}, where the learner always guesses the best possible queries to find the target hypothesis. In the model of {\bf exact learning with membership queries}, the teacher is removed from the protocol, and the learner is responsible for deciding which inputs to query for labels with the same goal of minimizing the number of samples until the target concept is discovered.

We will now define a complexity measure (the teaching dimension) for learning a fixed function $f \in \mathcal{F}$ in the protocol of learning with a teacher.

\begin{definition}
For a fixed $f \in \mathcal{F}$, a minimum size teaching set $TS(f)$ is a set of samples that uniquely identifies $f$ among all functions in $\mathcal{F}$ with a size that is minimal among all possible teaching sets for $f$. The teaching dimension of $f$ (with respect to $\mathcal{F}$) is $TD_\mathcal{F}(f) = |TS(f)|$. The teaching dimension of $\mathcal{F}$ is $TD(\mathcal{F}) = max_{f \in \mathcal{F}} TD(f)$.
\end{definition}

Intuitively, a teaching set is a shortest 'proof' that certifies the identity of the initially hidden target concept. The teaching dimension is simply the maximum size of such a 'proof' over the entire hypothesis space. 

To capture the difficulty of learning a hypothesis class as a whole we will define average $TD(\mathcal{F})$, which has some interesting combinatorial properties (\cite{kus96}).

\begin{definition}
The average teaching dimension of $\mathcal{F}$ is $aTD(\mathcal{F}) = \frac{\sum_{f\in \mathcal{F}} TD_\mathcal{F}(f)}{|\mathcal{F}|}$.
\end{definition}

{\bf Computation in the Decision Tree Model.}\label{S_Evaluation}
On the evaluation side, we will focus on 'proofs' that certify what is the value of a Boolean function $f:X \rightarrow \{0,1\}$ on an unknown input $x \in X$ (with $X$ usually $\{0,1\}^n$). 
We will thus focus on {\bf certificate complexity}, which quantifies the difficulty of computing a Boolean function in the non-deterministic decision tree computation model. 

Let's assume $f$ is fixed and known to both the nature and the agent. The protocol of interaction is as follows: nature chooses an input $x \in X$ ($f(x) = b$ with $b=0$ or $b=1$) without revealing it to the agent, and offers query access to the bits that define $x$. For any query $i$, it reveals the correct bit value $x_i$ of the previously unknown bit $i$ in $x$. Now we can define certificate complexity for a fixed input and for a function:

\begin{definition}
For a fixed function $f$ and a fixed and unknown input $x$ with $f(x)=b$, a minimal $b$-certificate of $f$ on $x$ is a minimal size query set that fixes the value of $f$ on $x$ to $b$. The {\bf $b$-certificate complexity} $C^b_f(x)$ is the size of such a minimal query set.
\end{definition}

\begin{definition}
The $1$-certificate complexity of a Boolean function $f$ is \\ $C^1(f) =\max_{x \in X^1} C^1_f(x)$,  where $X^1 = \{x \in X | f(x)=1\}$. Symmetrically, $C^0(f) = \max_{x \in X^0} C^0_f(x)$. And then $C(f) = \max (C^1(f),C^0(f))$.
\end{definition}

An intuitive way to interpret certificate complexity is that it quantifies what is the minimal number of examples (pairs $(x_i,$value of $x$ on $x_i))$ a friendly 'teacher' (that knows $x$) must reveal to certify to an agent what is the value of $f$ on $x$.

While there is no previous definition for the notion of average certificate complexity in the literature, it is natural to define it in a similar manner (and for similar reasons) as for the average teaching dimension:

\begin{definition}
The average $1$-certificate complexity of a Boolean function $f$ is $aC^1(f) = \frac{\sum_{x \in X^1} C^1_f(x)}{|X^1|}$. We can symmetrically define $aC^0(f)$ and $aC(f)$.
\end {definition}

We will now define block sensitivity, another well studied complexity measure for computing Boolean functions, as we will need it later in the paper. 

\begin{definition}
A Boolean function $f$ is sensitive to a set $S \subseteq [n]$ on $x$ if $f(x) \not= f(x^{\{S\}})$, where $x^{\{S\}}$ is the input $x$ with bits in $S$ flipped to the opposite values. Then the block sensitivity of $f$ on $x$, $BS_f(x)$ is the size of the largest set of disjoint sets $S_1, S_2, ... ,S_k$ with the property that $f$ is sensitive to each set $S_i, i \in [k]$ on $x$. Also, the block sensitivity of $f$ is the maximum block sensitivity over all inputs $x$: $BS(f) = max_{x \in X} BS_{f}(x)$.
\end{definition}

The definition of average block sensitivity is natural and follows similarly to the definition of average teaching dimension and average certificate complexity. It is worth noting though that Definition \ref{D_aBS} is the same as that introduced in \cite{ber96} (as other notions of average block sensitivity have been studied).

\begin{definition}\label{D_aBS}
The average block sensitivity of a Boolean function $f$ is $aBS(f) = \frac{\sum_{x\in X}BS_f(x)}{|X|}$.
\end{definition}

\subsection{Connecting Learning and Computation}\label{S_Connection}
If in section \ref{S_Evaluation} we set $X = \{0,1\}^{2^n}$ and we re-label $f$ as $\mathcal{F}$, we can interpret $x \in X$ as Boolean functions $f_i : \{0,1\}^n \rightarrow \{0,1\}$ (with each $x$ being a truth table and thus a complete description of $f_i$). Thus $\mathcal{F}$ is a complete description of a hypothesis class with $\mathcal{F}(f_i) = 1$ iff $f_i \in \mathcal{F}$. The interaction protocol for both learning and evaluation proceeds in the same manner: at each step, a teacher reveals the value of an unknown function $f$ on an input instance from $\{0,1\}^n$. This is the sense in which we connect exact learning with a teacher and evaluation of Boolean functions in the non-deterministic decision tree computation model.

To gain more intuition, if one imagines the function class as a matrix with the rows being all elements in $\{0,1\}^{2^n}$ and the columns being the inputs in $\{0,1\}^n$, then, fixing the interaction protocol to contain an optimal teacher aware of the identity of a hidden row, learning is about identifying the hidden row among the subset of rows that determine $\mathcal{F}$ to be $1$, while evaluation is about determining whether a hidden row is part of a chosen subset of rows (that define $\mathcal{F}$) or not.

Another intuitive perspective is through the lens of hypergraphs: fixing a vertex set, learning is about identifying a hidden edge from a set of edges that form a hypergraph (the function class $\mathcal{F}$) while evaluation is about determining whether a given subset of vertices is an edge in the hypergraph or not.

\subsection{Simple Examples}\label{S_SimpleExamples}
In this section we will describe bounds for several simple concept classes with the goal of building intuition and exhibiting extreme values for $C^0$, $C^1$ and $TD$.

{\bf Powerset}. This class is a trivial example for which $\mathcal{F_P} = 1$ (all functions defined on $\{0,1\}^n$ are part of the concept class). It is easy to see that $TD(F_P) = aTD(F_P) = 2^n$ since to locate a particular concept one needs to query all examples (otherwise there will be at least two concepts that are identical on all previous instances). 
And $C^0(f) = C^1(f) = 0, \forall f$ since $\mathcal{F_P}$ is constant.

{\bf Singletons}. $\mathcal{F_S} = \{f_i\}_{i \in [2^n]}$, with $f_i(x) = 1$ iff $x=\mathbf{0}^{i \rightarrow 1}$  (i.e. the all-0 vector with the $i$-th coordinate flipped to $1$). Then $TD(\mathcal{F_S}) = aTD(\mathcal{F_S}) = 1$ because it is enough to show the $1$-bit of the target function to uniquely identify it among all $f_i$. Nevertheless certifying that an $f \in \{0,1\}^{2^n}$ is (or isn't) in $\mathcal{F_S}$ is hard in the worst case. If nature chooses the all-$0$ function $f_0$ as a target, certifying that $f_0$ is not part of $\mathcal{F_S}$ will require seeing all $2^n$ inputs as, at any intermediate time, there will be at least a function in $\mathcal{F_S}$ consistent with $f_0$. So $C^0(\mathcal{F_S}) = 2^n$. Similarly $C^1(\mathcal{F_S}) = 2^n$.

{\bf Singletons with empty set}. $\mathcal{F_{SE}} = \mathcal{F_S} \cup f_0$. For this function class, teaching becomes hard, as teaching $f_0$ requires seeing all examples to differentiate it from the other functions. So $TD(\mathcal{F}) = 2^n$. Teaching the other functions is easy though, as showing the $1$-bit is enough to certify what the function is. So $aTD(\mathcal{F}) = 2$. The $0$-certificate is small as any function not in $\mathcal{F_{SE}}$ is evaluated to $1$ for at least two examples, so showing these two examples is enough to certify that the function is not in $\mathcal{F_{SE}}$ and thus $C^0(\mathcal{F_{SE}}) = aC^0(\mathcal{F_{SE}}) = 2$. The $1$-certificate is large though since for any $f \in \mathcal{F_{SE}}$, at least $2^n - 1$ examples must be shown, so $C^1(\mathcal{F_{SE}}) = aC^1(\mathcal{F_{SE}}) = 2^n-1$.

{\bf The dictator function}. $\mathcal{F_D} =  \{f_i\}_{i \in [2^{2^n - 1}]}$, with $f_i \in \mathcal{F_D}$ iff $f_i(x_j) = 1$ for some fixed $x_j \in \{0,1\}^n$ (half of the Boolean functions defined on $\{0,1\}^n$ are in $\mathcal{F_D}$). The learning problem is hard ($TD(\mathcal{F_D}) = aTD(\mathcal{F_D}) = 2^n-1$) as it reduces to learning a {\bf Powerset} class on $2^n - 1$ bits. However, $C^0(f) = C^1(f) = 1, \forall f$, since membership to $\mathcal{F_D}$ can be decided if the value of the function on $x_j$ is revealed.

\section{Teaching and Certifying Membership}\label{S_Positive}

In this section we will study connections between teaching a function in a hypothesis class (conditioned on knowing that the function is indeed in the class) and proving that the function is part of the class (with no prior knowledge other than the knowledge of $\mathcal{F}$). We will begin with a simple fact that is meant to illustrate the improvement in the next subsection:

\begin{fact}\label{F_UBLB}
For any fixed instance space $X$, any function class $\mathcal{F}$ and any $f \in \mathcal{F}, f:X \rightarrow \{0,1\}$, $0 \le TD_{\mathcal{F}}(f) + C^1_{\mathcal{F}}(f) \le 2|X|$.
\end{fact}

\subsection{A Lower Bound Technique}
The upper bound from Fact \ref{F_UBLB} is almost tight in the worst case---the example of {\bf Singletons with empty set} from section \ref{S_SimpleExamples} shows  that  the sum of the two quantities can be $2|X|-1$. The lower bound, on the other hand, is very loose as the next theorem shows:

\begin{theorem}\label{T_LB}
For any fixed instance space $X$, any function class $\mathcal{F}$ and any $f \in \mathcal{F}, f:X \rightarrow \{0,1\}$, $TD_{\mathcal{F}}(f) + C^1_{\mathcal{F}}(f) \ge |X|$.
\end{theorem}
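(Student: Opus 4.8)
The plan is to prove the stronger combinatorial fact that \emph{every} teaching set and \emph{every} $1$-certificate for $f$ together cover the whole instance space $X$; the bound on the minimal sizes then follows immediately. Concretely, fix any teaching set $T$ for $f$ (so $f$ is the unique element of $\mathcal{F}$ consistent with the labels $f|_T$) and any $1$-certificate $S$ for $f$ (so, under the correspondence of Section \ref{S_Connection}, every $g : X \to \{0,1\}$ with $g|_S = f|_S$ satisfies $g \in \mathcal{F}$). I claim $T \cup S = X$, whence $|T| + |S| \ge |T \cup S| = |X|$, and specializing $T$ and $S$ to minimal ones gives $TD_{\mathcal{F}}(f) + C^1_{\mathcal{F}}(f) \ge |X|$.

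The key step is to show $X \setminus S \subseteq T$. Take any $x \in X$ with $x \notin S$ and let $g$ be the function obtained from $f$ by flipping its value on the single instance $x$, i.e.\ $g(x) = 1 - f(x)$ and $g(y) = f(y)$ for all $y \neq x$. Since $x \notin S$, the functions $g$ and $f$ agree on all of $S$, so the defining property of the $1$-certificate forces $g \in \mathcal{F}$. But $g \neq f$ and $g$ differs from $f$ only at $x$; since $T$ is a teaching set it must separate the distinct function $g \in \mathcal{F}$ from $f$, so there is some $y \in T$ with $g(y) \neq f(y)$. The only such coordinate is $x$, hence $x \in T$. This establishes $X \setminus S \subseteq T$, equivalently $T \cup S = X$, as desired.

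I expect no serious obstacle here: the argument is a single application of each definition, combined with the observation that a one-bit flip of $f$ produces a legitimate neighboring truth table. The only point requiring care is bookkeeping under the learning/computation dictionary of Section \ref{S_Connection}: the `inputs' of the Boolean function $\mathcal{F}$ are the instances $x \in X$, so flipping the $x$-th bit of the truth table $f$ is exactly the move that a $1$-certificate must rule out whenever $x \in S$ and that a teaching set must detect whenever $x \in T$. Note also that the conclusion holds for arbitrary (not necessarily minimal) teaching sets and certificates, so restricting to minimal ones immediately yields the stated inequality; it further shows that the bound is tight precisely when some minimal $T$ and $S$ partition $X$.
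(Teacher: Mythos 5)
Your proof is correct and rests on the same key fact as the paper's: any teaching set $T$ and any $1$-certificate $S$ for $f$ must satisfy $T \cup S = X$, whence $TD_{\mathcal{F}}(f) + C^1_{\mathcal{F}}(f) \ge |X|$. The paper establishes this via a two-epoch teaching strategy (reveal $S$ to certify membership, then $TS \setminus S$ to identify $f$ within $\mathcal{F}$, and note that identifying $f$ among all of $2^X$ requires all $|X|$ instances), while you verify it directly with the single-bit-flip witness $g$; this is a cleaner, more local rendering of essentially the same argument.
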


\begin{proof}
Let's assume nature chooses $f_a \in \mathcal{F}$ as the target hypothesis but does not reveal to the agent whether $f_a \in \mathcal{F}$ or $f_a \notin \mathcal{F}$. Let $TS = TS_{\mathcal{F}}(f_a)$ be the minimal teaching set for $f_a$ and $CS = C^1_{\mathcal{F}}(f_a)$ be the smallest $1$-certificate for $f_a$.

Let's assume that the goal of the teacher is to reveal the identity of $f_a$ to the learner using samples $(x,f_a(x))$. But, since in this protocol the learner is not aware of whether $f_a$ is evaluated to $1$ or $0$ by $\mathcal{F}$, it has to be the case that to uniquely identify it, it must see the value of the function on all $x \in X$ (otherwise there will always be another function consistent with the examples seen so far that is evaluated to the opposite value by $\mathcal{F}$). This argument is equivalent to learning a function in the case of  the {\bf Powerset} function class from Section \ref{S_SimpleExamples}.

Now let's describe an alternative strategy for the teacher that has the same effect of uniquely identifying $f_a$. In the first epoch, the teacher reveals all the samples from $CS$. This will 'certify' to any consistent learner that $f_a \in \mathcal{F}$ (since no $f$ such that $\mathcal{F}(f)=0$ is consistent with $CS$). We are now in the standard exact learning setting where the agent knows $f_a \in \mathcal{F}$ but does not know its identity. In the second epoch, the teacher will reveal all the samples in $TS \setminus CS$ to the learner---since there is no point in presenting elements from their (possibly non-empty) intersection twice. This strategy will uniquely identify $f_a$ without any prior knowledge about its membership to $\mathcal{F}$.

But we know that to uniquely identify $f_a$ we need exactly $|X|$ samples. So it has to be the case that $|CS \cup (TS \setminus CS)| = |CS \cup TS| = |X| \le |CS| + |TS| = TD_{\mathcal{F}}(f_a) + C^1_{\mathcal{F}}(f_a)$.\qed
\end{proof}

Since the above relation holds for any function $f \in \mathcal{F}$ it must hold for the average and worst case values of the complexity measures:

\begin{corollary}\label{C_LB}
For any $X, \mathcal{F}, f \in \mathcal{F}$, $TD(\mathcal{F}) + C^1(\mathcal{F}) \ge |X|$ and $aTD(\mathcal{F}) + aC^1(\mathcal{F}) \ge |X|$.
\end{corollary}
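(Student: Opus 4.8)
The plan is to derive both inequalities directly from the pointwise bound established in Theorem \ref{T_LB}, namely that $TD_{\mathcal{F}}(f) + C^1_{\mathcal{F}}(f) \ge |X|$ holds for every single $f \in \mathcal{F}$. The worst-case statement is immediate: since $TD(\mathcal{F}) = \max_{f \in \mathcal{F}} TD_{\mathcal{F}}(f)$ and $C^1(\mathcal{F}) = \max_{f \in \mathcal{F}} C^1_{\mathcal{F}}(f)$, I would fix any particular $f_0 \in \mathcal{F}$ and observe that each maximum dominates its corresponding term evaluated at $f_0$, so that $TD(\mathcal{F}) + C^1(\mathcal{F}) \ge TD_{\mathcal{F}}(f_0) + C^1_{\mathcal{F}}(f_0) \ge |X|$.

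For the average-case statement, the key preliminary step is to unwind the notation correctly. Using the re-labelling from Section \ref{S_Connection}, where $\mathcal{F}$ is viewed as a Boolean function on $X = \{0,1\}^{2^n}$ and its $1$-inputs $X^1$ are exactly the functions lying in the class, the average certificate complexity reads $aC^1(\mathcal{F}) = \frac{\sum_{f \in \mathcal{F}} C^1_{\mathcal{F}}(f)}{|\mathcal{F}|}$. Since $aTD(\mathcal{F}) = \frac{\sum_{f \in \mathcal{F}} TD_{\mathcal{F}}(f)}{|\mathcal{F}|}$ shares the same index set and the same denominator, the two averages combine into a single sum $\frac{1}{|\mathcal{F}|} \sum_{f \in \mathcal{F}} \left( TD_{\mathcal{F}}(f) + C^1_{\mathcal{F}}(f) \right)$.

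Applying Theorem \ref{T_LB} term by term then bounds each summand below by $|X|$, giving a total of at least $|\mathcal{F}| \cdot |X|$ before dividing, so that the average is at least $|X|$. There is no real obstacle here; the only point requiring care is the notational one---confirming that the set $X^1$ appearing in the definition of $aC^1$ coincides with $\mathcal{F}$, so that the two averages are taken over the same collection of functions and may be summed coefficient-wise before the pointwise bound of Theorem \ref{T_LB} is invoked.
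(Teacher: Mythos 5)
Your proposal is correct and matches the paper's argument, which likewise observes that the pointwise bound of Theorem \ref{T_LB} transfers immediately to both the maximum and the average over $f \in \mathcal{F}$. Your explicit check that the index set $X^1$ in the definition of $aC^1$ coincides with $\mathcal{F}$ is a worthwhile clarification that the paper leaves implicit, but it does not change the route.
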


\subsection{Certifying Membership is (Usually) Hard}
In this section we will present a first application of Theorem \ref{T_LB}. We will show that, for some of the standard concept classes in learning theory, certifying membership is hard, meaning all input variables need to be queried to determine whether an unknown function is part of the function class.

\begin{table*}[ht]\caption{Previous Results ($|\mathcal{\mathcal{F}}| = m$, $\mathcal{F}(f) = 1$, iff $f \in \mathcal{F}, f:\{0,1\}^n\rightarrow \{0,1\}$)}\label{T_PreviousResults}
\vspace{-5mm}
\begin{center}
    \begin{tabular}{| >{\centering}m{2.7cm} | >{\centering}m{2.7cm} |>{\centering}m{2.7cm} |>{\centering}m{2.7cm} |}
    \hline
 	& TD($\mathcal{F}$) & aTD($\mathcal{F}$) & $C^1(\mathcal{F})$ \tabularnewline \hline
	Monotone Monomials &  \cellcolor{green!25} $\Theta(n)$  \cite{gol92} & $\Theta(n)$ \cite{lee07} & \cellcolor{red!25} $\Omega(2^n)$ \tabularnewline \hline
	Monomials & \cellcolor{green!25} $\Theta(n)$ \cite{gol92} & $\Theta(n)$ \cite{lee07} &  \cellcolor{red!25} $\Omega(2^n)$ \tabularnewline \hline
	Monotone k-term DNF & $n^k + k$ \cite{lee07} & \cellcolor{green!25} $O(kn)$ \cite{lee07} &  \cellcolor{red!25} $\Omega(2^n)$  \tabularnewline \hline
	k-term DNF &  & \cellcolor{green!25} $O(kn)$ \cite{lee07} &  \cellcolor{red!25} $\Omega(2^n)$  \tabularnewline \hline
	LTF & $\Theta(2^n)$  \cite{ant95} & \cellcolor{green!25} $[n+1, n^2]$  \cite{ant95} &  \cellcolor{red!25} $\Omega(2^n)$ \tabularnewline \hline
	k-Juntas & $O(k 2^k \log{n})$ $\Omega(2^k \log{n})$ \cite{lee07} & \cellcolor{green!25} $\Theta(2^k)$ \cite{lee07} &  \cellcolor{red!25} $\Omega(2^n)$  \tabularnewline \hline	
    \end{tabular}
\hfill{}
\end{center}
\vspace{-7mm}
\end{table*}

In table \ref{T_PreviousResults} we present known lower and upper bounds for $TD$ and $aTD$ for a few hypothesis classes encountered in learning theory. It is important to note that $aTD$ scales at most logarithmically with the size of the input space (constant for $k$-juntas and logarithmic for (monotone) conjunctions, (monotone) DNFs, LTFs). Thus, from Corollary \ref{C_LB}, for such concept classes $\mathcal{F}$ with $aTD(\mathcal{F}) = o(|X|)$, it follows that: $C^1(\mathcal{F}) \ge aC^1(\mathcal{F}) \ge |X| - aTD(\mathcal{F}) = \Omega(|X|)$.

\subsection{Sparse Boolean Functions are Hard to Compute}
In this section we will present another application of Theorem \ref{T_LB}. We will show that 'sparse' Boolean functions have large certificate complexity. By sparse we mean Boolean functions with a 'small' (roughly the size of $|X|$) Hamming weight (number of $1$'s) in the output truth table. The following theorem makes this precise:

\begin{theorem}
For any sets $X$ and $Y = 2^{X}$ and any Boolean function $\mathcal{F}$, $\mathcal{F} : Y \rightarrow \{0,1\}$, let $m = | \{f \in Y | \mathcal{F}(f)=1\} |$. If $m = o(|X|^2)$, then $C^1(\mathcal{F}) = \Omega(|X|)$.
\end{theorem}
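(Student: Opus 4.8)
The plan is to combine the lower bound from Corollary \ref{C_LB} with the known general upper bound on the average teaching dimension. Recall that Corollary \ref{C_LB} gives $aTD(\mathcal{F}) + aC^1(\mathcal{F}) \ge |X|$, which rearranges to $aC^1(\mathcal{F}) \ge |X| - aTD(\mathcal{F})$. Since $C^1(\mathcal{F}) = \max_{x \in X^1} C^1_{\mathcal{F}}(x) \ge aC^1(\mathcal{F})$ (the maximum of a collection of nonnegative values is at least their average), it suffices to show that $aTD(\mathcal{F}) = o(|X|)$; this would immediately yield $C^1(\mathcal{F}) \ge aC^1(\mathcal{F}) \ge |X| - o(|X|) = \Omega(|X|)$.

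First I would invoke the general-case upper bound of \cite{kus96}, which states that any function class of size $m$ has average teaching dimension $aTD(\mathcal{F}) = O(\sqrt{m})$. Here the relevant $m = |\{f \in Y \mid \mathcal{F}(f)=1\}|$ is precisely the size of the accepted class (under the re-labeling of Section \ref{S_Connection}, where $Y = 2^X$ plays the role of the instance space and $\mathcal{F}$ is the hypothesis class), so the bound applies directly.

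Next I would feed the hypothesis $m = o(|X|^2)$ into this bound. Taking square roots preserves the little-$o$ relation, so $\sqrt{m} = o(|X|)$ and therefore $aTD(\mathcal{F}) = O(\sqrt{m}) = o(|X|)$. Substituting back into the rearranged corollary gives $aC^1(\mathcal{F}) \ge |X| - o(|X|) = \Omega(|X|)$, and the inequality $C^1(\mathcal{F}) \ge aC^1(\mathcal{F})$ transfers this to the worst-case quantity in the statement.

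There is no substantial obstacle here beyond assembling the pieces correctly; the only points requiring care are the asymptotic bookkeeping—confirming that $m = o(|X|^2)$ genuinely forces $\sqrt{m} = o(|X|)$ so that $|X| - o(|X|)$ is legitimately $\Omega(|X|)$—and the observation that the worst-case certificate complexity dominates its average counterpart, which is exactly what lets us lift the average-case lower bound of Corollary \ref{C_LB} to the worst-case $C^1(\mathcal{F})$ claimed in the theorem.
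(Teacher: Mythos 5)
Your proof is correct and follows essentially the same route as the paper's: both apply the $O(\sqrt{m})$ upper bound on $aTD$ from \cite{kus96}, feed it into the average-case inequality $aC^1(\mathcal{F}) \ge |X| - aTD(\mathcal{F})$ from Corollary \ref{C_LB}, and lift the result to $C^1(\mathcal{F})$ via $C^1 \ge aC^1$. The only difference is that you spell out the asymptotic bookkeeping ($m = o(|X|^2) \Rightarrow \sqrt{m} = o(|X|)$) more explicitly than the paper does.
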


\begin{proof}
We can interpret $\mathcal{F}$ as a function class in the same manner as described in section \ref{S_Connection}. Due to Theorem $1$ from \cite{kus96} we know there is a general upper bound on $aTD(\mathcal{F}) = O(\sqrt{m}) = o(|X|)$. Then $C^1(\mathcal{F}) \ge aC^1(\mathcal{F}) \ge |X| - aTD(\mathcal{F}) = |X| - o(|X|) = \Omega(|X|)$.
\qed
\end{proof}

It is worth mentioning that while the teaching dimension is used in the proof, the result is strictly about the hardness of computing $\mathcal{F}$.

\section{Teaching and Certifying Non-Membership}\label{S_Negative}
In this section we will study bounds and relations between learning a function in a class and proving that the function is not in the class. The high-level intuition of why the two problems are similar comes from the similarity with the problem of searching an element in an unordered / ordered set: in the worst case, searching an element that is part of the set is as hard as searching the element if it is not in the set. We will present a set of results that are a first indication that, at least in the average case, the learning problem and the non-membership decision problem are similar.

We will first deal with the worst case for $C^0$.
\begin{theorem}
For any instance space $X$, any function class $\mathcal{F}$, with $m = |\mathcal{F}| < |X|$, and any $f \not\in \mathcal{F}$, $C^0_{\mathcal{F}}(f) \le m$.
\end{theorem}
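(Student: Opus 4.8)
The plan is to construct an explicit $0$-certificate for $\mathcal{F}$ on $f$ of size at most $m$ by exhibiting, for each $g \in \mathcal{F}$, a single instance on which $f$ and $g$ disagree. Recall that under the identification of Section~\ref{S_Connection}, $\mathcal{F}$ is a Boolean function whose input $f$ is a truth table indexed by $X$, a query reveals the value $f(x)$ for some $x \in X$, and a $0$-certificate is a set $S \subseteq X$ such that every $f'$ agreeing with $f$ on $S$ satisfies $\mathcal{F}(f') = 0$; equivalently, no $g \in \mathcal{F}$ agrees with $f$ on all of $S$.

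First I would use the hypothesis $f \notin \mathcal{F}$: for every $g \in \mathcal{F}$ we have $f \neq g$ as functions on $X$, so there exists at least one instance $x_g \in X$ with $f(x_g) \neq g(x_g)$. Fixing one such witness per function, I set $S = \{x_g : g \in \mathcal{F}\}$. Since there are $m$ functions in $\mathcal{F}$, and several of them may share the same witness (which only helps), we have $|S| \le m$.

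Next I would verify that $S$ is a valid $0$-certificate. Suppose some $g \in \mathcal{F}$ agreed with $f$ on all of $S$; in particular it would agree at $x_g \in S$, contradicting $f(x_g) \neq g(x_g)$. Hence no function in $\mathcal{F}$ is consistent with the revealed values of $f$ on $S$, so querying $S$ rules out membership in $\mathcal{F}$ and fixes $\mathcal{F}(f) = 0$. Therefore $C^0_{\mathcal{F}}(f) \le |S| \le m$.

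There is no genuine obstacle here beyond the bookkeeping, and I expect the only conceptual step worth stating is recognizing that a single per-function disagreement witness suffices, so the covering set never exceeds the number of functions to be excluded. The hypothesis $m < |X|$ is not actually needed for the construction; its role is to guarantee that the bound $m$ is a strict improvement over the trivial certificate of size $|X|$ obtained by revealing all of $f$.
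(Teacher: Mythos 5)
Your proof is correct, but it takes a different and more elementary route than the paper. The paper invokes Bondy's theorem to obtain a single distinguishing set $TS$ of $m-1$ coordinates on which all $m$ functions of $\mathcal{F}$ are pairwise distinct, and then argues by cases: if some $g \in \mathcal{F}$ agrees with $f$ on $TS$, one additional coordinate separating $f$ from that unique survivor yields a certificate of size $m$; otherwise $TS$ itself is already a certificate of size $m-1$. You instead build the certificate pointwise, choosing for each $g \in \mathcal{F}$ one disagreement witness $x_g$ and taking the union, which immediately rules out every member of $\mathcal{F}$ with at most $m$ queries. Your argument avoids the external combinatorial lemma entirely and, as you note, does not actually use the hypothesis $m < |X|$ (which only serves to make the bound better than the trivial $|X|$). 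What the paper's route buys is a structural connection to teaching: the set $TS$ is a universal teaching/distinguishing set for $\mathcal{F}$ that works simultaneously for every target, which fits the paper's broader theme of relating certificates to teaching sets, and in one of its two cases it yields the marginally sharper bound $m-1$. Both arguments establish the stated bound $C^0_{\mathcal{F}}(f) \le m$.
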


\begin{proof} From Bondy's theorem (\cite{juk01}, Theorem $12.1$), we know there exists a set of coordinates $TS(\mathcal{F})$ of size $m-1$ that, when revealed, uniquely identifies any target function $f \in \mathcal{F}$. Let's choose $f_a \in \mathcal{CF}$. 

Let's first assume $\exists f' \in \mathcal{F}$ that is consistent with $f_a$ on $TS$. If we reveal the labels for all the coordinates in $TS$, $f'$ will be the only function consistent with $f_a$. It must then be the case that revealing just another coordinate will lead to a certificate of size $m$ for $f_a$. If, on the other hand, for any $f \in \mathcal{F}$, $f$ is inconsistent with $f_a$ on $TS$, revealing the labels of all the coordinates in $TS$ will lead to an upper bound for the size of any certificate that $\mathcal{F}(f_a) =0$.\qed
\end{proof}

The following corollary follows immediately: 

\begin{corollary}\label{C_C0}
For any function class $\mathcal{F}$, with $m = |\mathcal{F}| < |X|$, $C^0(\mathcal{F}) \le m$.
\end{corollary}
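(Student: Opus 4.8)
The plan is to obtain the corollary by unfolding the definition of the worst-case $0$-certificate complexity and applying the theorem that immediately precedes it uniformly over all non-members. First I would recall that, under the connection from Section \ref{S_Connection} (treating $\mathcal{F}$ as a Boolean function $\mathcal{F}: 2^X \to \{0,1\}$), the set $X^0$ appearing in the definition of $C^0$ is exactly $\{f \in 2^X \mid \mathcal{F}(f) = 0\}$, i.e., the functions $f \notin \mathcal{F}$. Hence, by the definition of $C^0$ as a maximum of per-instance certificate sizes over $X^0$, we have $C^0(\mathcal{F}) = \max_{f \notin \mathcal{F}} C^0_{\mathcal{F}}(f)$.

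Next I would invoke the preceding theorem, which asserts that $C^0_{\mathcal{F}}(f) \le m$ for every single $f \notin \mathcal{F}$ under the standing hypothesis $m = |\mathcal{F}| < |X|$. Since this bound is the same constant $m$ for each non-member and does not depend on the particular $f$, taking the maximum over all $f \notin \mathcal{F}$ preserves the inequality, yielding $C^0(\mathcal{F}) = \max_{f \notin \mathcal{F}} C^0_{\mathcal{F}}(f) \le m$.

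The only point that requires care — rather than a genuine obstacle — is ensuring that the quantifier "for any $f \notin \mathcal{F}$" in the theorem ranges over precisely the same index set as the maximum "$\max_{x \in X^0}$" in the definition of $C^0$, once the instance $x$ is relabeled as a truth table $f$ in the sense of Section \ref{S_Connection}. Because the theorem already establishes the per-instance bound uniformly across every non-member, no separate extremal construction is needed and the maximum inherits the bound directly. This is exactly why the corollary follows immediately.
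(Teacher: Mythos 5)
Your proposal is correct and matches the paper's argument exactly: the corollary is obtained by taking the maximum of the per-instance bound $C^0_{\mathcal{F}}(f) \le m$ from the preceding theorem over all $f \notin \mathcal{F}$, which is precisely the set $X^0$ in the definition of $C^0(\mathcal{F})$. The paper states this follows ``immediately'' for the same reason you give.
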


The bound on $C^0(\mathcal{F})$ is tight as the example of {\bf Singletons} from \ref{S_SimpleExamples} demonstrates, which thus settles the worst case bounds for $C^0$.

\subsection{Bounds for $aC^0$}
We will begin by proving a lower bound for $aC$. The result is a simple application of a theorem from \cite{ber96} that puts a lower bound on the average block sensitivity of a Boolean function.

\begin{theorem}\label{T_aC}
For any set $X$, there exists a function $\mathcal{F}_a : 2^X \rightarrow \{0,1\}$ such that $aC(\mathcal{F}_a) \ge \sqrt{|X|}$.
\end{theorem}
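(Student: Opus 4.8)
The plan is to prove $aC(\mathcal{F}_a) \ge \sqrt{|X|}$ by exhibiting a single function $\mathcal{F}_a : 2^X \to \{0,1\}$ whose average certificate complexity is large, and the natural route is through the connection with average block sensitivity, since the theorem statement explicitly advertises this as ``a simple application of a theorem from \cite{ber96}''. First I would recall the elementary relationship between certificate complexity and block sensitivity that holds pointwise: for any Boolean function $g$ and any input $x$, one has $BS_g(x) \le C_g(x)$. The reason is that if $S_1, \dots, S_k$ are disjoint minimal blocks to which $g$ is sensitive on $x$, then any certificate for $g$ on $x$ must intersect every $S_i$ (otherwise one could flip the untouched block without leaving the region the certificate pins down, contradicting sensitivity), so the certificate has size at least $k = BS_g(x)$. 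Averaging this pointwise inequality over all $x \in X$ immediately gives $aBS(\mathcal{F}_a) \le aC(\mathcal{F}_a)$ for every function.

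Given that reduction, it suffices to produce a function $\mathcal{F}_a$ on the ground set $2^X$ with $aBS(\mathcal{F}_a) \ge \sqrt{|X|}$, and this is exactly where \cite{ber96} enters. I would invoke their lower bound result, which asserts the existence of a Boolean function on $N$ variables whose average block sensitivity is $\Omega(\sqrt{N})$ (matching the worst-case gap phenomenon they study). Here the number of variables is $N = |X|$, because in our encoding (Section \ref{S_Connection}) the inputs to $\mathcal{F}$ are the $|X|$ bits of a truth table $f \in 2^X$. So the function $\mathcal{F}_a$ furnished by \cite{ber96}, read through this correspondence, is a map $2^X \to \{0,1\}$ with $aBS(\mathcal{F}_a) \ge \sqrt{|X|}$, and combining with the averaged inequality above yields $aC(\mathcal{F}_a) \ge aBS(\mathcal{F}_a) \ge \sqrt{|X|}$.

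The main obstacle I anticipate is bookkeeping rather than a genuine mathematical difficulty: I must be careful to match the precise constant and normalization in the \cite{ber96} bound to the clean $\sqrt{|X|}$ stated here (their construction may give $c\sqrt{N}$ for some constant, or may be stated for a specific $N$ such as a power of two), and to confirm that $aC$ as defined in this paper (averaging $C(x)$ over all of $X$, via $aC = \max$-free combination of $aC^0$ and $aC^1$) is the quantity their average block sensitivity lower-bounds. I would therefore pin down whether the theorem wants the combined $aC$ or one of $aC^0, aC^1$, and verify that the pointwise bound $BS_g(x) \le C_g(x)$ I use is the $b$-certificate on the correct side. A secondary subtlety is that Definition \ref{D_aBS} averages over all of $X$ (both $0$- and $1$-inputs), which matches the definition of $aC$, so the two averages are taken over the same distribution and the inequality transfers without a normalization mismatch.
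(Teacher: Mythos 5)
Your proposal is correct and follows essentially the same route as the paper: the pointwise inequality $BS_g(x) \le C_g(x)$ (a certificate must hit every sensitive block), averaged over all inputs to give $aBS \le aC$, combined with the average block sensitivity lower bound from \cite{ber96} --- the paper instantiates this concretely with the Rubinstein function on $|X|$ variables via Proposition 6 of that reference, which gives the clean $\sqrt{|X|}$ rather than an $\Omega(\sqrt{N})$ with an unspecified constant. The normalization concern you flag resolves exactly as you anticipate, since both $aC$ and $aBS$ average over the full input space.
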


\begin{proof}
We will choose $\mathcal{F}_a$ to be the Rubinstein function (more details in the proof for Theorem \ref{T_aC0}) and apply the result from Proposition $6$ in \cite{ber96} that $aBS(\mathcal{F}_a) \ge \sqrt{|X|}$. But, for any $x \in X$, $BS_{\mathcal{F}_a}(x) \le C_{\mathcal{F}_a}(x)$ since a certificate for an input must contain at least a bit from each sensitive block otherwise the value of the function can be flipped by an adversary (see \cite{buh00}, Proposition $1$). And so the relation must hold for the average case as well since $\sqrt{|X|} \le aBS(\mathcal{F}_a) = \frac{\sum_{x\in X} BS_{\mathcal{F}_a}(x)}{|X|} \le \frac{\sum_{\mathcal{F}_a}C_{\mathcal{F}_a}(x) }{|X|} = aC(\mathcal{F}_a)$.\qed
\end{proof}

Now we will prove that the same property holds even if we restrict our attention to $aC^0$ which requires more work.

\begin{theorem}\label{T_aC0}
For any set $X$, there exists a function $\mathcal{F}_a : 2^X \rightarrow \{0,1\}$ such that $aC^0(\mathcal{F}_a) = \Omega(\sqrt{|X|})$.
\end{theorem}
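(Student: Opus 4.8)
The plan is to build on Theorem~\ref{T_aC}, which already exhibits the Rubinstein function $\mathcal{F}_a$ with $aC(\mathcal{F}_a) \ge \sqrt{|X|}$, and to upgrade that bound so that it is carried entirely by the $0$-inputs. The obstacle is that $aC(\mathcal{F}_a)$ averages $C_{\mathcal{F}_a}(x) = \max(C^0,C^1)$ over all $x$, and the $\Omega(\sqrt{|X|})$ lower bound from average block sensitivity might in principle be explained by the $1$-inputs alone, leaving $aC^0$ small. So the heart of the argument must be a structural analysis of the Rubinstein function showing that its $0$-inputs already carry large $0$-certificate complexity. First I would recall the explicit construction of the Rubinstein function: $X$ is partitioned into $\sqrt{|X|}$ blocks of size $\sqrt{|X|}$, and $\mathcal{F}_a$ outputs $1$ iff some block is "good" in the sense of having exactly one pair of adjacent $1$'s and all other coordinates $0$ (this is the standard construction giving a quadratic sensitivity--block-sensitivity gap). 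The key sensitivity property is that on a typical $1$-input the function has block sensitivity $\sqrt{|X|}$, realized by flipping within blocks.

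The main step is to transfer the block-sensitivity bound from the $1$-side to the $0$-side. Since $BS_{\mathcal{F}_a}(x) \le C_{\mathcal{F}_a}(x)$ and, crucially, for any $x$ with $\mathcal{F}_a(x)=0$ we have $C_{\mathcal{F}_a}(x) = C^0_{\mathcal{F}_a}(x)$, it suffices to show that a constant fraction of the $0$-inputs have block sensitivity $\Omega(\sqrt{|X|})$. I would argue this directly: take any input $x$ in which at least a constant fraction of the blocks are "one short of good" --- e.g.\ blocks that would become good by flipping a single coordinate, or by an appropriate small block flip. On such an $x$ the function value is $0$, yet there are $\Omega(\sqrt{|X|})$ disjoint blocks each of which, when flipped to a good configuration, changes the output to $1$; these disjoint block-flips witness $BS_{\mathcal{F}_a}(x) = \Omega(\sqrt{|X|})$. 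Hence $C^0_{\mathcal{F}_a}(x) = \Omega(\sqrt{|X|})$ for each such $x$.

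The final step is the counting: I must verify that the set $X^0_{\mathrm{good}}$ of such high-block-sensitivity $0$-inputs is a constant (or at least $1/\mathrm{poly}$) fraction of $|X^0|$, so that averaging only over these does not destroy the bound. Concretely,
\[
aC^0(\mathcal{F}_a) = \frac{\sum_{x \in X^0} C^0_{\mathcal{F}_a}(x)}{|X^0|} \ge \frac{|X^0_{\mathrm{good}}|}{|X^0|}\cdot \Omega(\sqrt{|X|}),
\]
and since almost all inputs are $0$-inputs (the "good" event is rare, so $|X^0| = (1-o(1))|X|$) while a constant fraction of them are near-good in the sense above, the ratio $|X^0_{\mathrm{good}}|/|X^0|$ is bounded below by a constant, giving $aC^0(\mathcal{F}_a) = \Omega(\sqrt{|X|})$. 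The step I expect to be the main obstacle is pinning down the near-good combinatorics precisely --- choosing the right notion of a $0$-input that admits $\Omega(\sqrt{|X|})$ disjoint sensitive block-flips and simultaneously controlling its relative frequency --- so that both the per-input block sensitivity and the fractional count come out to the needed order; the transfer from $aBS$ in Theorem~\ref{T_aC} to $aC^0$ here cannot be black-boxed and must instead be redone at the level of individual $0$-inputs.
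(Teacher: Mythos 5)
Your route is genuinely different from the paper's, and it can be made to work --- but the one step you flag as the main obstacle has a resolution you have not quite found, and the paper avoids the issue entirely. The paper \emph{does} black-box the average block sensitivity bound: it keeps $aC(\mathcal{F}_a) \ge \sqrt{|X|}$ from Theorem~\ref{T_aC} as is, counts $|X^0| = (2^{2k}-2k)^{2k}$ out of $2^{4k^2}$ inputs (with $|X|=4k^2$), and observes that even if every $1$-input is charged the maximum possible certificate size $4k^2$, the $1$-inputs are such an exponentially small fraction that their total contribution to the average vanishes; hence the $\sqrt{|X|}$ average must be carried by the $0$-inputs. Your per-input approach is also viable, and in fact stronger, but the "near-good'' framing is a red herring: you do not need blocks that become good via a \emph{single-coordinate} or \emph{small} flip (such $0$-inputs are exponentially rare, so that version of the counting step would fail). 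Instead, note that sensitive blocks may be arbitrarily large: for \emph{any} $0$-input $x$ and each of the $2k$ pieces, let $S_i$ be the set of coordinates in piece $i$ where $x$ differs from some fixed good configuration of that piece; each $S_i$ is nonempty, flipping it makes $\mathcal{F}_a$ equal to $1$, and the $S_i$ are pairwise disjoint, so $C^0_{\mathcal{F}_a}(x) \ge BS_{\mathcal{F}_a}(x) \ge 2k = \sqrt{|X|}$ for \emph{every} $0$-input, with no counting needed. What your approach buys is a pointwise bound (and it bypasses Theorem~\ref{T_aC} and the counting of $|X^0|$ altogether); what the paper's approach buys is that it never opens up the structure of individual $0$-inputs, relying only on the rarity of $1$-inputs plus the already-established average bound.
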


\begin{proof}
Let $|X| = 4k^2$. We will choose again $\mathcal{F}_a$ to be the Rubinstein function on $X$ and we define it as: the $4k^2$ variables are partitioned in $2k$ pieces of size $2k$ each, $\mathcal{F}_a$ is $1$ iff there exists at least one piece of the partition that has $2$ consecutive variables equal to $1$ and the rest $0$. We will count the number of inputs that are evaluated to $0$, i.e. $|X^0|$, with $X^0 = \{x \in X|\mathcal{F}_a(x) = 0\}$. 

If we fix a piece of the partition, there are $2k$ configurations of the input variables in that piece that lead to $\mathcal{F}_a$ being evaluated to $1$. And so $2^{2k}-2k$ configurations don't "contribute" into making $\mathcal{F}_a$ to be $1$. But, if in each piece there is a configuration that doesn't "contribute" to $\mathcal{F}_a = 1$, then $\mathcal{F}_a=0$. There are thus $(2^{2k}-2k)^{2k}$ such configurations.

We know from Theorem \ref{T_aC} that:

\begin{eqnarray*}
 2k = \sqrt{|X|} \le aC(\mathcal{F}_a) &=& \frac{\sum_{x \in X^1} C(x) + \sum_{x \in X^0} C(x)}{2^{4k^2}}\\
&\le& \frac{ (2^{4k^2} - (2^{2k} - 2k)^{2k})4k^2 + (2^{2k} - 2k)^{2k} aC^0(\mathcal{F}_a) }{2^{4k^2}}\\
&\le& (1 - (1 - \frac{2k}{2^{2k}})^{2k})4k^2 + (1-\frac{2k}{2^{2k}})^{2k}aC^0(\mathcal{F}_a)
\end{eqnarray*}
where the second inequality follows by upper bounding all $C(x), x \in X^1$ by the maximum possible certificate complexity, i.e. the size of $X$, $4k^2$. It can then be shown that:
$\lim_{k \to \infty} (1 - (1 - \frac{2k}{2^{2k}})^{2k})4k^2 = 0$ and $\lim_{k \to \infty} (1-\frac{2k}{2^{2k}})^{2k} = 1$
and thus $aC^0(\mathcal{F}_a) = \Omega(2k)$.\qed
\end{proof}

Before we continue, it is interesting to remark the similarity (at a high level) of the result from Theorem \ref{T_aC0} with Theorem $1$ from \cite{kus96} that describes a lower bound of $\sqrt{|X|}$ on $aTD(\mathcal{F})$. The lower bound tools are very different (Rubinstein function for $aC^0$ and projective planes for $aTD$), but, since they lead to an identical lower bound for related complexity measures, it would be interesting to see if there are some deep connections between them.

We will now prove a (weak) lower bound tool for the relationship between $aTD$ and $aC^0$. 

\begin{theorem}
For any $\alpha < 2$, there exists a function class $\mathcal{F}_a$ such that $aTD(\mathcal{F}_a) + aC^0(\mathcal{F}_a) \ge \alpha |X|$.
\end{theorem}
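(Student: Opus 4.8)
The goal is to show that for any $\alpha < 2$ there exists a function class $\mathcal{F}_a$ with $aTD(\mathcal{F}_a) + aC^0(\mathcal{F}_a) \ge \alpha |X|$. Since both $aTD$ and $aC^0$ are individually bounded above by $|X|$, we need a single class that simultaneously pushes \emph{both} averages close to their maximum value of $|X|$. The plan is to construct a composite class by combining two of the extreme examples from Section \ref{S_SimpleExamples}, each of which saturates one of the two quantities, and then argue that the combination keeps both averages near $|X|$.

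First I would recall that the \textbf{Powerset} class $\mathcal{F_P}$ achieves $aTD(\mathcal{F_P}) = |X|$ (every function needs all examples to be identified), while $\mathcal{F_P}$ is constant so it contributes nothing toward $aC^0$. Conversely, I want a class whose complement forces large $0$-certificates; the \textbf{Singletons} class $\mathcal{F_S}$ gives $C^0(\mathcal{F_S}) = |X|$ in the worst case, and one should check the \emph{average} $0$-certificate is also $\Omega(|X|)$: a function $f \notin \mathcal{F_S}$ is typically one that agrees with no singleton, and certifying non-membership requires ruling out every candidate singleton, which on average needs nearly all coordinates revealed. The plan is therefore to design $\mathcal{F}_a$ over a product or disjoint-union instance space so that on one block it behaves like a Powerset (driving $aTD$ up) while on another block its complement behaves like Singletons (driving $aC^0$ up), with the blocks chosen so each measure is forced to roughly $|X|$ rather than $|X|/2$.

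The cleaner route, given the tools already available, is to avoid an explicit combinatorial construction and instead invoke Corollary \ref{C_LB} together with Theorem \ref{T_aC0}. Corollary \ref{C_LB} states $aTD(\mathcal{F}) + aC^1(\mathcal{F}) \ge |X|$ for \emph{every} class, so roughly half the target is free; the remaining task is to show there is a class for which $aC^0$ alone contributes close to an additional $|X|$ while $aTD$ stays large. Here I would lean on the Rubinstein-function analysis from Theorem \ref{T_aC0}, which already yields $aC^0 = \Omega(\sqrt{|X|})$, but for this statement I need a \emph{linear} contribution, so the Rubinstein bound is too weak and a Singletons-type construction is the better engine. Concretely I expect to take a disjoint union $X = X_1 \sqcup X_2$ with $|X_1| = |X_2| = |X|/2$, place a Powerset-like structure on $X_1$ and a Singletons-complement structure on $X_2$, and show $aTD \ge |X_1| = |X|/2$ and $aC^0 \ge |X_2| - o(|X|) = |X|/2 - o(|X|)$, summing to $(1-o(1))|X|$. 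To reach $\alpha$ arbitrarily close to $2$ rather than just $1$, I would instead overlay the two structures on the \emph{same} coordinate set so that learning a target and certifying non-membership each independently demand nearly all $|X|$ coordinates, yielding a sum approaching $2|X|$.

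The main obstacle is the last point: making a single class where \emph{both} averages are simultaneously $(1-o(1))|X|$, not merely summing to $|X|$ via the trivial additive slack in Corollary \ref{C_LB}. The difficulty is that forcing $aTD$ near $|X|$ (many indistinguishable functions, so identification is maximally hard) tends to pull in opposite directions from forcing $aC^0$ near $|X|$ (non-members hard to certify), and one must verify these demands do not cancel. I would resolve this by interleaving a Powerset-on-half-the-coordinates with an embedded Singletons structure whose non-members are generic, checking that (i) the teaching set of a typical member still requires $\approx |X|$ examples because a Powerset factor is present, and (ii) a typical non-member's shortest $0$-certificate still requires $\approx |X|$ examples because it must simultaneously be excluded from every singleton-like candidate. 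The bookkeeping that both expectations survive the combination, rather than a clever construction, is where the real work lies; once the two $(1-o(1))|X|$ lower bounds are in hand, choosing $k$ (or the block size) large enough to beat any fixed $\alpha < 2$ is immediate.
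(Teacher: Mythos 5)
There is a genuine gap: you never actually produce a class. The only construction you pin down concretely is the disjoint-union one, and as you yourself note it sums to roughly $|X_1| + |X_2| = |X|$, i.e.\ it only reaches $\alpha = 1$; the "overlay both structures on the same coordinate set" version that would be needed to approach $\alpha = 2$ is precisely the step you defer as "bookkeeping" without defining the class or verifying either expectation. Two of the supporting claims are also off. First, invoking Corollary \ref{C_LB} to get "half the target for free" conflates $aC^1$ with $aC^0$: that corollary bounds $aTD + aC^1$, whereas this statement is about $aTD + aC^0$, so it contributes nothing here. Second, the tension you identify as the main obstacle --- that driving $aTD$ up should fight against driving $aC^0$ up --- is the wrong tension: the duality established in Section \ref{S_Positive} is between teaching and the \emph{membership} certificate $C^1$; nothing forces $aTD$ and $aC^0$ to trade off, and in fact they do not.

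The paper's proof is a one-line construction exploiting exactly that last point: take $\mathcal{F_C} = 1 - \mathcal{F_{SE}}$, the complement of the Singletons-with-empty-set class from Section \ref{S_SimpleExamples}. Its non-members are precisely the $|X|+1$ functions of $\mathcal{F_{SE}}$, and a $0$-certificate for such a function with respect to $\mathcal{F_C}$ is the same as a $1$-certificate with respect to $\mathcal{F_{SE}}$, already computed there to have size $|X|-1$; hence $aC^0(\mathcal{F_C}) = |X|-1$. On the teaching side $\mathcal{F_C}$ is dense (it contains all but $|X|+1$ of the $2^{|X|}$ Boolean functions), so for essentially every member essentially every single-coordinate neighbour is also a member and must be ruled out, giving $aTD(\mathcal{F_C}) = |X|-1$ as well. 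The sum $2|X|-2$ exceeds $\alpha|X|$ for every fixed $\alpha < 2$ once $|X|$ is large enough. The moral, which your proposal circles around but does not land on, is that a single generic, dense class already makes both averages simultaneously near-maximal; no product, interleaving, or Powerset/Singletons hybrid is required.
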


The theorem states that $aTD$ and $aC^0$ can be simultaneously "large" ($\Theta(|X|)$), a statement that is not immediately obvious (at least given the simple concept classes considered in Section \ref{S_SimpleExamples}). 

\begin{proof}
Let's consider $\mathcal{F_C} = 1 - \mathcal{F_{SE}}$ (the complement of  the 'Singletons with empty set' concept class). Then $aTD(\mathcal{F_C}) = aTD(\mathcal{F_{SE}}) = |X| - 1$. Also $aC^0(\mathcal{F_C}) = aC^1(\mathcal{F_{SE}}) = |X|-1$. So $aTD(\mathcal{F_C}) + aC^0(\mathcal{F_C}) = 2|X|-2$ and thus there exists no $\alpha$ such that the sum is smaller than $\alpha |X|$. \qed
\end{proof}

\section{Connection with Membership Query Learning}\label{S_MQ}

For the purpose of this section we will focus on learning and computing with queries (the membership query learning and deterministic decision tree computation models) as this perspective will allow us to get more intuition about the structure of the function $\mathcal{F}$.

In a manner similar to the way we have defined teaching dimension for the protocol of exact learning with a teacher, we will define $MEMB(\mathcal{F})$ to be the (worst case) optimal learning bound for learning a function class $\mathcal{F}$ in the exact learning with membership queries protocol. Also, in a similar manner as for the certificate complexity definition in the non-deterministic decision tree model, we define $D(\mathcal{F})$ to be the (worst case) optimal complexity of computing a Boolean function $\mathcal{F}$ in the deterministic decision tree model (for more formal definitions see \cite{ang88} and \cite{buh00}).
 
In \cite{heg95} Heged\H{u}s introduced a complexity measure for bounding $MEMB(\mathcal{F})$ called the {\bf Extended Teaching Dimension}, which, as the name suggests, is inspired by the definition of the {\bf Teaching Dimension}. We will define this complexity measure and then describe a result that establishes a connection between $TD(\mathcal{F}),C^0(\mathcal{F})$ and $MEMB(\mathcal{F})$. 

\begin{definition}[\cite{heg95}]
A set $S \subseteq X$ is a {\bf specifying set} (SPS) for an arbitrary concept $f \in 2^X$ with respect to the hypothesis class $\mathcal{F}$ if there is at most one concept in $\mathcal{F}$ that is consistent with $f$ on $S$. Then the {\bf Extended Teaching Dimension} ($ETD$) of $\mathcal{F}$ is the minimal integer $k$ such that there exists a specifying set of size at most $k$ for any concept $f \in 2^X$. 
\end{definition}

\begin{theorem}\label{T_ETD}
For any function class $\mathcal{F}$, $\max\{TD(\mathcal{F}), C^0(\mathcal{F})\} \le ETD(\mathcal{F}) \le \max\{TD(\mathcal{F}), C^0(\mathcal{F}) \} + 1$.
\end{theorem}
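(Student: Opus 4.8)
The plan is to work directly from the definition of $ETD$, writing $s(f)$ for the size of a smallest specifying set of a concept $f \in 2^X$ with respect to $\mathcal{F}$, so that $ETD(\mathcal{F}) = \max_{f \in 2^X} s(f)$. I would then split this maximization according to whether $f \in \mathcal{F}$ or $f \notin \mathcal{F}$: the first block should reproduce $TD(\mathcal{F})$ exactly, the second should reproduce $C^0(\mathcal{F})$ up to an additive unit, and the two-sided bound will follow by taking the outer maximum.

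For a member $f \in \mathcal{F}$ I would argue that $s(f) = TD_\mathcal{F}(f)$. Since $f$ is trivially consistent with itself on every $S \subseteq X$, the requirement that at most one concept of $\mathcal{F}$ be consistent with $f$ on $S$ forces that concept to be $f$ itself; equivalently, no $g \in \mathcal{F} \setminus \{f\}$ agrees with $f$ on $S$. This is exactly the condition defining a teaching set for $f$, so smallest specifying sets and smallest teaching sets coincide for members, giving $\max_{f \in \mathcal{F}} s(f) = TD(\mathcal{F})$.

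For a non-member $f \notin \mathcal{F}$ I would compare $s(f)$ with $C^0_\mathcal{F}(f)$. One direction is immediate: a smallest $0$-certificate leaves \emph{no} concept of $\mathcal{F}$ consistent with $f$, hence at most one, so it is already a specifying set and $s(f) \le C^0_\mathcal{F}(f)$. For the reverse I would take a smallest specifying set $S$; by definition at most one $g \in \mathcal{F}$ remains consistent with $f$ on $S$, and if such a $g$ exists then, because $f \neq g$, I can append a single coordinate on which they disagree to turn $S$ into a $0$-certificate, so $C^0_\mathcal{F}(f) \le s(f) + 1$. Taking the maximum over non-members pins $\max_{f \notin \mathcal{F}} s(f)$ to within one of $C^0(\mathcal{F})$.

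Combining the two blocks, $ETD(\mathcal{F}) = \max\big(TD(\mathcal{F}), \max_{f \notin \mathcal{F}} s(f)\big)$, and substituting the non-member sandwich gives the stated two-sided bound once the additive unit is placed correctly. The main obstacle is precisely this last comparison on non-members: the single hypothesis that a specifying set is permitted to leave consistent is exactly the slack separating $s(f)$ from a genuine $0$-certificate, so the entire content of the theorem is the careful accounting of that one coordinate — both in determining when it is actually needed and in verifying that the outer maximum over members and non-members neither loses it nor inflates it beyond the claimed additive $1$.
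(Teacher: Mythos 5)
Your decomposition and both key observations --- that a specifying set for a member of $\mathcal{F}$ is exactly a teaching set, and that a specifying set for a non-member becomes a $0$-certificate after appending at most one coordinate of disagreement --- are precisely the ingredients of the paper's own proof, so the approach is the same. Your bookkeeping is in fact the cleaner of the two: writing $M=\max_{f\notin\mathcal{F}} s(f)$, you correctly get $ETD(\mathcal{F})=\max\{TD(\mathcal{F}),M\}$ together with $C^0(\mathcal{F})-1\le M\le C^0(\mathcal{F})$, which yields $\max\{TD(\mathcal{F}),C^0(\mathcal{F})\}-1\le ETD(\mathcal{F})\le\max\{TD(\mathcal{F}),C^0(\mathcal{F})\}$.

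The gap is concentrated in your final sentence: no placement of the additive unit turns that sandwich into the stated chain $\max\{TD,C^0\}\le ETD\le\max\{TD,C^0\}+1$. Your argument only gives $C^0(\mathcal{F})\le ETD(\mathcal{F})+1$, not $C^0(\mathcal{F})\le ETD(\mathcal{F})$, and the stronger inequality is genuinely false: for the \textbf{Singletons} class of Section \ref{S_SimpleExamples} one has $TD=1$ and $C^0=|X|$, yet $ETD=|X|-1$, because revealing $|X|-1$ zeros of the all-zero concept leaves exactly one singleton consistent --- which satisfies the ``at most one'' requirement of a specifying set --- whereas a $0$-certificate must also eliminate that last singleton. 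So what you have actually proved is the off-by-one variant $\max\{TD,C^0\}-1\le ETD\le\max\{TD,C^0\}$; you cannot ``place the unit correctly'' to recover the displayed statement. For what it is worth, the paper's own case analysis (which concludes $ETD=\max$ when the extremal concept is a member, and $C^0\le ETD+1$ when it is a non-member) supports exactly the same shifted inequality and not the one displayed in the theorem, so the discrepancy lies in the statement rather than in your reasoning; but as a proof of the literal claim, the last step cannot be completed.
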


\begin{proof}

A specifying set for any $f \in \mathcal{F}$ is also a teaching set for $f$, as it uniquely identifies the function among all other functions in $\mathcal{F}$. Also, a specifying set for any $f_c \in \mathcal{CF}$ is 'almost' a certificate that $f_c$ is not in $\mathcal{F}$ as it differentiates $f_c$ from all other functions in $\mathcal{F}$ with the exception of at most one function. Revealing an extra instance is thus sufficient to differentiate $f_c$ from all $f \in \mathcal{F}$ and thus obtaining a certificate for $f_c \in \mathcal{CF}$.

Let $ETD(\mathcal{F})=k$ for some fixed $k$. Then there must be at least a function $f_a \in 2^X$ that has a minimal specifying set of size exactly $k$. Let's assume, wlog, that such a function is unique. Let's first consider the case of $f_a \in \mathcal{F}$. Since $ETD(\mathcal{F})=k$, it means all $f \in \mathcal{F}$ have a teaching set of size $\le k$. $f_a$ can't have a teaching set with a size smaller than $k$ since such a teaching set would also be a specifying set of size $<k$ which is not possible given the assumption of uniqueness. And since $TD_F(f_a)=k$ it means $TD(\mathcal{F})=k$. Now let's pick an arbitrary $f_c \in \mathcal{CF}$. Since $f_a$ is the unique function with a specifying set of size $k$, it means that $|SPS_{\mathcal{F}}(f_c)| < k$ and thus $C^0(f_c) \le |SPS_{\mathcal{F}}(f_c)| + 1 \le k$ which thus proves the desired relation for this case.

The second case with $f_a \in \mathcal{CF}$ is treated similarly. \qed
\end{proof}

Theorem \ref{T_ETD} directly leads to the following corollary (since $ETD(\mathcal{F})$ is a lower bound for $MEMB(\mathcal{F}$)) stating that learning with membership queries is at least as hard as certifying non-membership:

\begin{corollary}
For any function class $\mathcal{F}$, $C^0(\mathcal{F}) \le MEMB(\mathcal{F})$.
\end{corollary}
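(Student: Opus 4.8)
The plan is to obtain the corollary by chaining two inequalities, the first of which is already established in the excerpt. I would start from Theorem~\ref{T_ETD}, which gives $\max\{TD(\mathcal{F}), C^0(\mathcal{F})\} \le ETD(\mathcal{F})$. Since trivially $C^0(\mathcal{F}) \le \max\{TD(\mathcal{F}), C^0(\mathcal{F})\}$, this immediately yields $C^0(\mathcal{F}) \le ETD(\mathcal{F})$. No additional work is needed at this step; it is a direct reading of the left-hand bound of the sandwich already proved.

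The second step is to invoke the relation $ETD(\mathcal{F}) \le MEMB(\mathcal{F})$ due to Heged\H{u}s \cite{heg95}, which the excerpt flags as a known lower bound. I would recall the reason it holds rather than re-derive it: in the membership query model an adversary can answer the learner's queries while keeping at least two concepts of $\mathcal{F}$ consistent with all answers given so far, and this is possible precisely as long as the queried set is not a specifying set for the target. Hence any correct learner must, in the worst case over nature's choice, eventually query a set that is specifying, and by the definition of $ETD$ the smallest size guaranteed to suffice is $ETD(\mathcal{F})$. Because the paper treats this as a cited fact, I would state it as such and not expand the adversary argument in full.

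Chaining the two inequalities gives $C^0(\mathcal{F}) \le ETD(\mathcal{F}) \le MEMB(\mathcal{F})$, which is exactly the claim. The substantive content does not lie in the corollary itself, which is a one-line consequence, but in the two results it rests on: Theorem~\ref{T_ETD}, whose proof absorbs the off-by-one gap between specifying sets and $0$-certificates, and the Heged\H{u}s bound $ETD(\mathcal{F}) \le MEMB(\mathcal{F})$. With both taken as established, there is no remaining obstacle; the only care required is to read off $C^0$ (rather than $TD$) from the maximum on the left side of Theorem~\ref{T_ETD}.
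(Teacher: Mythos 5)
Your proposal is correct and matches the paper's own (one-line) justification exactly: the paper derives the corollary by reading $C^0(\mathcal{F}) \le ETD(\mathcal{F})$ off Theorem~\ref{T_ETD} and combining it with the cited fact that $ETD(\mathcal{F})$ lower-bounds $MEMB(\mathcal{F})$. Your added sketch of the adversary argument behind $ETD(\mathcal{F}) \le MEMB(\mathcal{F})$ is accurate but not needed, since the paper treats that bound as known.
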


\subsection{Is $\mathcal{F}$ weakly symmetric for natural learning problems?}
We will now give a result that follows from Theorem \ref{T_LB} and connects learning and computation in the query model:

\begin{corollary}\label{T_MQLB}
For any instance space $X$, any fixed function class $\mathcal{F}$ and any $f \in \mathcal{F}, f:X\rightarrow \{0,1\}$, $MEMB_{\mathcal{F}}(f) + D_{\mathcal{F}}(f) \ge |X|$.
\end{corollary}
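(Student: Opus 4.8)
The plan is to derive Corollary \ref{T_MQLB} by transferring the combinatorial inequality of Theorem \ref{T_LB} from the teacher/non-deterministic setting to the query-based setting via two independent lower bounds, one on each summand. The core observation is that the quantities $MEMB_{\mathcal{F}}(f)$ and $D_{\mathcal{F}}(f)$ dominate their friendly-teacher counterparts $TD_{\mathcal{F}}(f)$ and $C^1_{\mathcal{F}}(f)$ respectively, because giving the learner (or evaluator) the freedom to choose queries adaptively, rather than receiving an optimally chosen set of samples from an omniscient teacher, can only increase the worst-case sample requirement.

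First I would argue $TD_{\mathcal{F}}(f) \le MEMB_{\mathcal{F}}(f)$. The teaching protocol is explicitly described in the paper as the best-case scenario of membership query learning, in which the learner always guesses the best possible queries. Formally, any successful run of a membership query learner on target $f$ produces a set of queried instances whose labels uniquely identify $f$ in $\mathcal{F}$; hence that query set is itself a teaching set for $f$, so its size is at least $TD_{\mathcal{F}}(f)$, and taking the worst case gives the claim. Symmetrically, I would argue $C^1_{\mathcal{F}}(f) \le D_{\mathcal{F}}(f)$ using the standard relationship between certificate complexity and deterministic decision tree complexity (\cite{buh00}): the set of variables queried along the computation path of an optimal decision tree for $\mathcal{F}$ on input $f$, restricted to a $1$-input, forms a $1$-certificate, so its length is at least $C^1_{\mathcal{F}}(f)$, and in the worst case $D_{\mathcal{F}}(f) \ge C^1(\mathcal{F}) \ge C^1_{\mathcal{F}}(f)$.

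Combining the two inequalities with Theorem \ref{T_LB} then yields the result in one line:
\[
MEMB_{\mathcal{F}}(f) + D_{\mathcal{F}}(f) \ge TD_{\mathcal{F}}(f) + C^1_{\mathcal{F}}(f) \ge |X|.
\]

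The main obstacle I anticipate is not the algebra but pinning down the precise definitions of $MEMB_{\mathcal{F}}(f)$ and $D_{\mathcal{F}}(f)$ as per-function (rather than per-class) quantities, since the paper introduces $MEMB(\mathcal{F})$ and $D(\mathcal{F})$ only at the class level and the corollary is stated for a fixed $f$. I would need to interpret $MEMB_{\mathcal{F}}(f)$ as the number of queries an optimal membership-query learner makes on target $f$ in the worst case over its own adaptive choices forced by an adversary, and $D_{\mathcal{F}}(f)$ as the length of the computation path of an optimal decision tree for $\mathcal{F}$ on input $f$; with these readings the two dominations above are clean. The subtle point to check is that the decision-tree path on a $1$-input genuinely certifies membership (it must, since the tree outputs the correct value $\mathcal{F}(f)=1$ and the queried bits suffice to force that output), which is exactly the certificate/decision-tree connection already invoked in the proof of Theorem \ref{T_aC}.
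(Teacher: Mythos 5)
Your proposal is correct and follows essentially the same route as the paper: the paper's proof likewise observes that the teaching dimension lower-bounds the membership query complexity and the certificate complexity lower-bounds the decision tree complexity, and then applies Theorem \ref{T_LB}. Your additional care in spelling out why a membership-query transcript is a teaching set and why a decision-tree path on a $1$-input is a $1$-certificate merely makes explicit what the paper delegates to citations of \cite{gol92} and \cite{buh00}.
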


The proof is immediate as the teaching dimension is a lower bound for the optimal membership query bound \cite{gol92} and the certificate complexity is a lower bound for the decision tree complexity \cite{buh00} and so we can just apply theorem \ref{T_LB} to get the desired relation.

A natural question is how useful is this bound for standard concept classes from learning theory. It is this question that we address in this subsection where we describe an interesting structural property of $\mathcal{F}$.

We will begin with a few (informal) definitions (see \cite{laz02} for a complete reference). In the deterministic decision tree computation model, a Boolean function is labeled {\bf evasive} if, in the worst case, all input variables need to be queried to determine the value of the function. Several results describe sufficient conditions for large classes of Boolean functions to be evasive. An interesting class of Boolean functions are {\bf graph properties}. A graph property is a class of graphs (on a fixed number of vertices) that remains unchanged for any permutation of the vertices (graph connectivity for example). The variables for a graph property are the possible edges of a graph.

By construction, a graph property can be encoded as a {\bf weakly symmetric} Boolean function on the edges. Weakly symmetric Boolean functions are a generalization of symmetric Boolean functions. A Boolean function is weakly symmetric if, for any pair of variables, there exists a permutation of all variables that permutes the variables in the pair, such that the function remains unchanged. 

Graph properties are weakly symmetric since all permutations on the vertex set induce a set of permutations of the edge set which leave the function unchanged. A general hardness result (the {\bf Rivest-Vuillemin theorem} \cite{riv76}) for computing weakly symmetric Boolean functions (and implicitly graph properties) states that any non-constant weakly symmetric Boolean function defined on a number of  variables that is the power of a prime number is evasive. 

This brings us to the point of connection with the Boolean function $\mathcal{F}$ as we've defined it in section \ref{S_Connection}. The intuition is that in the same way that permuting vertices doesn't change a graph property, the input variables for $f$'s don't change the Boolean function $\mathcal{F}$, or in other words the definition of a function class ($\mathcal{F}$) is invariant to permutations of $x_a^{(i)}$ (the bits of $x_a \in X$). Moreover, by construction, $\mathcal{F}$ has a number of inputs which is a power of a prime ($2^n$ - all the possible inputs that can be defined on the original $n$ input variables) and natural concept class are not trivial.

So, if and when the intuition that $\mathcal{F}$ is weakly symmetric is correct, we can actually apply the aforementioned result to show that $\mathcal{F}$ is evasive and in turn that $D(\mathcal{F}) = \Omega(|X|)$ (and implicitly that $C(\mathcal{F}) = \Omega(\sqrt{|X|})$). In such situations the bound from Theorem \ref{T_MQLB} is not very useful as it puts no constraints on the optimal membership query bound. 

Interestingly though, the above intuition is false in general. For example the following theorem shows a natural concept class that leads to a function $\mathcal{F}$ that is not weakly symmetric. 

\begin{theorem}
If $m\mathcal{F}_k$ is the class of monotone monomials of size exactly\footnote{A Boolean function is representable by a monomial of size exactly $k$ if it has a monomial representation of size $k$ and no monomial representation for any $k' < k$.} $k$, $m\mathcal{F}_k$ (viewed as a Boolean function with input 'bits' from $X$ and inputs from $2^X$) is not weakly symmetric.
\end{theorem}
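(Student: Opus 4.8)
The plan is to show that the group of input-variable permutations that leave $m\mathcal{F}_k$ invariant does not act transitively on the variable set $X=\{0,1\}^n$, which is precisely the failure of weak symmetry. First I would pin down what an automorphism is in this setting. The variables of $m\mathcal{F}_k$ are the $2^n$ points of $X$ (the coordinates of a truth table), and a permutation $\pi$ of $X$ acts on an input $T\in 2^X$ by $T\mapsto \pi(T)$. Since $\pi$ is a bijection on $2^X$, requiring $m\mathcal{F}_k(T)=m\mathcal{F}_k(\pi(T))$ for every $T$ is equivalent to saying that $\pi$ permutes the set of inputs on which $m\mathcal{F}_k$ is $1$ among themselves. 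Those $1$-inputs are exactly the satisfying sets of monotone monomials of size exactly $k$, i.e.\ the subcubes $C_S=\{z\in X: z^{(i)}=1 \text{ for all } i\in S\}$ with $S\subseteq[n]$ and $|S|=k$. So an automorphism is exactly a permutation of $X$ mapping this family of $\binom{n}{k}$ subcubes bijectively onto itself.

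Next I would extract a permutation invariant attached to each variable. For $x\in X$ let $d(x)$ be the number of subcubes $C_S$ (with $|S|=k$) containing $x$. Since $x\in C_S$ iff $S\subseteq\mathrm{supp}(x)$, we get $d(x)=\binom{|x|}{k}$, which depends only on the Hamming weight $|x|$. If $\pi$ is an automorphism, then $C\mapsto\pi(C)$ is a bijection from the subcubes containing $x$ to the subcubes containing $\pi(x)$, because $x\in C\iff \pi(x)\in\pi(C)$ and $\pi$ preserves the family; hence $d(\pi(x))=d(x)$ for every automorphism $\pi$. Consequently an automorphism can only send a variable $x$ to a variable with the same invariant value $\binom{|x|}{k}$.

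Finally I would exhibit two variables lying in distinct orbits. Taking the all-ones point $\mathbf 1$ and the all-zeros point $\mathbf 0$ gives $d(\mathbf 1)=\binom{n}{k}\ge 1$ and $d(\mathbf 0)=\binom{0}{k}=0$ for $k\ge 1$, so $d(\mathbf 1)\ne d(\mathbf 0)$ and no automorphism can map $\mathbf 1$ to $\mathbf 0$. As weak symmetry would demand a variable-permutation carrying $\mathbf 1$ to $\mathbf 0$, the invariance group is not transitive and $m\mathcal{F}_k$ is not weakly symmetric, uniformly for all $1\le k\le n$. The step needing the most care is the first one: justifying that invariance of the Boolean function $m\mathcal{F}_k$ under permuting its input bits coincides exactly with a permutation of $X$ preserving the subcube family, and checking that distinct size-$k$ sets $S$ yield distinct subcubes (so that ``size exactly $k$'' is unambiguous and the count $d(x)$ is genuine). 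Once that correspondence is clean, the weight invariant $\binom{|x|}{k}$ and the $\mathbf 0$-versus-$\mathbf 1$ witness close the argument.
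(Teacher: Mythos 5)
Your proof is correct and is essentially the paper's argument in different packaging: the paper picks a point $x_a$ of weight $\ge k$ that lies in some $1$-input of $m\mathcal{F}_k$ and a point $x_b$ of weight $<k$ that lies in none, and observes that no invariance permutation can exchange them; your invariant $d(x)=\binom{|x|}{k}$ with the witness pair $\mathbf{1},\mathbf{0}$ is exactly this observation ($d>0$ versus $d=0$) recast as an automorphism-preserved count. The counting formulation is a clean way to organize the same witness, but the underlying reason and the structure of the argument coincide with the paper's.
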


\begin{proof}
Let $X = \{0,1\}^n$ and for any $x \in X$ let $|x| = |\{i|x^{(i)} =1\}|$ be the weight of $x$ (the number of bits in $x$ that are $1$). 

Let's consider $f_A \in m\mathcal{F}_k$ and $x_a \in X$ such that $f_A(x_a) = 1$. Then it must be the case that $|x_a| \ge k$ (there exist $k$ bits among the $n$ bits of $x_a$ that are $1$). Let's consider $x_b$ be an input in $X$ such that $|x_b| < k$. Then $f_A(x_b) = 0$ since $x_b$ can't encode a monotone monomial of size $k$. 

Now let's consider an arbitrary permutation $\pi$ that changes $x_a$ with $x_b$. This means that $\pi$ will induce a Boolean function $f^{\pi}$ that will be evaluated to $1$ for $x_b$. But such a function can't be a monotone monomial of size exactly $k$ since $|x_b| < k$ and can't be evaluated to $1$. This means that any permutation that changes $x_a$ and $x_b$ will change the function $m\mathcal{F}_k$. So we have found a pair of variables for which no permutation of the other variables (that permutes the two) leaves $m\mathcal{F}_k$ unchanged. Thus $m\mathcal{F}_k$ can't be weakly symmetric. \qed
\end{proof}

For such a concept class there is thus hope that a relation like the one from Theorem \ref{T_MQLB} might be useful. However there are interesting concept classes that lead to a weakly symmetric function $\mathcal{F}$. An example is the class of monomials of size exactly $k$:

\begin{theorem}
If $\mathcal{F}_{k}$ is the class of monomials of size exactly k, $\mathcal{F}_{k}$ is weakly symmetric.
\end{theorem}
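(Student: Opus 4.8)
The plan is to prove weak symmetry directly from its definition: for an arbitrary pair of variables $x_a, x_b \in X$ I will exhibit a single permutation $\pi$ of the variable set $X = \{0,1\}^n$ that swaps $x_a$ and $x_b$ while leaving $\mathcal{F}_{k}$ unchanged. Recall that a ``variable'' of $\mathcal{F}_{k}$ is a point of $X$, an input is a truth table $f \in 2^X$, and a permutation $\pi$ of $X$ induces a Boolean function $f^{\pi}$ by relabelling supports (as in the previous proof). Hence $\pi$ leaves $\mathcal{F}_{k}$ invariant precisely when it carries the support of every size-exactly-$k$ monomial onto the support of another size-exactly-$k$ monomial. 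The key idea is to use the coordinate negations (bit-flips) of the Boolean hypercube, which are symmetries available for general monomials but not for monotone ones.

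Concretely, given $x_a \neq x_b$, let $D = \{\, i \in [n] : x_a^{(i)} \neq x_b^{(i)} \,\}$ be the set of coordinates on which they disagree, and let $\pi : X \to X$ flip exactly the bits indexed by $D$, i.e. $\pi(x)^{(i)} = x^{(i)}$ for $i \notin D$ and $\pi(x)^{(i)} = 1 - x^{(i)}$ for $i \in D$. First I would observe that $\pi$ is an involution, and that, because $D$ is exactly the disagreement set, $\pi(x_a) = x_b$ and $\pi(x_b) = x_a$; thus $\pi$ genuinely permutes (swaps) the chosen pair, which is the requirement in the definition of weak symmetry.

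The second step is to verify that $\pi$ preserves $\mathcal{F}_{k}$. A monomial $M$ of size exactly $k$ has as its support a subcube in which $k$ prescribed coordinates are fixed and the remaining $n-k$ are free. Applying $\pi$ sends $\mathrm{supp}(M)$ to $\mathrm{supp}(M')$, where $M'$ is obtained from $M$ by negating precisely those literals whose variable lies in $D$ and leaving the others intact. Since flipping bits only toggles the signs of literals and never changes how many variables are constrained, $M'$ fixes the same $k$ coordinates and is again a monomial of size exactly $k$; its minimal representation has length equal to the codimension of the subcube, which $\pi$ (a hypercube automorphism) leaves unchanged. As $\pi$ is a bijection, it maps $\mathcal{F}_{k}$ onto itself. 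Combining the two steps, every pair of variables admits an invariance-preserving swap, and $\mathcal{F}_{k}$ is therefore weakly symmetric.

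The delicate point, and the only real obstacle, is this invariance check, specifically the claim that a bit-flip carries a size-exactly-$k$ monomial to another monomial of size exactly $k$ with no accidental drop in minimal length. I would make the sign-toggling argument explicit and appeal to the preservation of subcube dimension to rule out such a drop. This is exactly the feature that fails in the previous theorem: negating a literal keeps a general monomial admissible but would destroy monotonicity, so for $m\mathcal{F}_k$ only the weight-preserving coordinate permutations are available and no swap between a high-weight and a low-weight point can preserve the class. The extra negation symmetry is what makes the swapping action available here.
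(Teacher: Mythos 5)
Your construction is exactly the paper's: given the pair $x_a,x_b$, flip precisely the coordinates in the disagreement set $D$, note this is an involution of $\{0,1\}^n$ swapping $x_a$ and $x_b$, and check that the induced permutation of $2^X$ preserves $\mathcal{F}_k$. Where you diverge is in how you carry out that invariance check, and your route is both correct and noticeably leaner. The paper argues literal by literal: for $\mathcal{F}_k(f)=1$ it observes that negating some literals of a size-$k$ monomial yields another size-exactly-$k$ monomial, and for $\mathcal{F}_k(f)=0$ it runs a case analysis on the number $s$ of terms in the minimal DNF of $f$, using a ``apply $\Pi_\sigma$ twice'' contradiction when $s>1$ to show the term count cannot drop. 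You instead identify ``monomial of size exactly $k$'' with ``indicator of a codimension-$k$ subcube,'' note that a coordinate bit-flip is a hypercube automorphism and hence preserves subcube codimension, and then dispatch the entire $\mathcal{F}_k(f)=0$ direction in one line: since the induced map is a bijection of the finite set $2^X$ that sends the class into itself, it sends the class onto itself and therefore its complement onto itself. This absorbs the paper's $s>1$ double-application trick into a single global observation and avoids reasoning about DNF representations altogether; the only thing you must (and do) justify is that the minimal monomial size equals the codimension of the support subcube, which rules out any accidental drop in size. Both proofs are sound; yours makes it more transparent that the negation symmetries of the cube are the only ingredient, which is also exactly the symmetry that is unavailable in the monotone case treated in the preceding theorem.
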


\begin{proof}
Let $X = \{0,1\}^n$ and $V$ the extended set of $2n$ variables indexed in $[n]$ that contains the variables and their complements: $V = \{y^{(i)}|i \in [2n]\}$ with $y^{(i)} = x^{(i)}$ for $i \in [n]$ and $y^{(i)} = \neg x^{(i)}$ for $i \in [n+1,2n]$. 

Let's fix $x_a, x_b \in X$ and let $I = \{i|x_a^{(i)} = x_b^{(i)}\}$ be the set of variables that have identical values for $x_a$ and $x_b$ and $D = \{i|x_a^{(i)} \not= x_b^{(i)}\}$, the complement of $I$. We will construct a permutation $\sigma^{(a,b)}$ over $V$ based on $I$ and $D$ that will induce a permutation $\pi_\sigma$ over $X$ which will in turn induce a permutation $\Pi_\sigma$ over $2^X$. $\Pi_\sigma$ will have the property that $\mathcal{F}_k(\Pi_\sigma(f)) = \mathcal{F}_k(f), \forall f$  i.e. the permutation that $\sigma^{(a,b)}$ induces on the set of possible functions $f$ leaves $\mathcal{F}$ unchanged, which is what we need to show.

For any $i \in I$, let $\sigma^{(a,b)}(i) = i$ and for any $i \in D$, let $\sigma^{(a,b)}(i) = i + n$. In other words any variable on which $x_a$ and $x_b$ agree will remain unchanged, while any variable for which there is disagreement will be negated. From the construction of $\sigma$ it follows that $\pi_\sigma(x_a) = x_b$ and $\pi_\sigma(x_b) = x_a$, as desired (where, as above, $\pi_\sigma$ is the permutation induced by $\sigma$ on $X$).

We will first consider $f$'s such that $\mathcal{F}_k(f) = 1$, which means $f$ is a monomial of size $k$. In  the expression of $f$, $\sigma$ will either leave a variable unchanged or it will replace it with its negation. But that means that $\Pi_\sigma(f)$ (where as above $\Pi_\sigma$ is the induced permutation over $2^X$) will still be a monomial of size exactly $k$ (albeit a different one), so $\mathcal{F}_k(\Pi_\sigma(f)) = 1$. 

The second case considers functions $f$ such that $\mathcal{F}_k(f) = 0$. Let $s$ be the number of terms in the minimal DNF representation of $f$. Is $s=1$ then $f$ is representable by a monomial and, since $\mathcal{F}_k(f)=0$, $f$ is a $k'$-monomial with $k' < k$ or $k' > k$. But negating any subset of variables from $f$ will not increase or decrease the number of variables in the conjunction (as the variables are uniquely represented in the conjunction and the expression can't be reduced in any way), so $\mathcal{F}_k(\Pi_\sigma(f))=0$ for $s=1$. 

If $s > 1$, let's assume that $\Pi_\sigma(f)$ has an $s'$-term DNF representation for some $s' < s$. But this means that $\Pi_\sigma(\Pi_\sigma(f))$ (we apply the induced permutation $\Pi_\sigma$ a second time) will also have an $s'$-term DNF representation. But since $\Pi_\sigma(\Pi_\sigma(f)) = f$ (as applying $\Pi_\sigma$ two times only doubly negates a subset of the variables), we get a contradiction. So $\Pi_\sigma(f)$ has an $s$-term DNF representation with $s > 1$ and can't be a monomial. Thus $\mathcal{F}_k(\Pi_\sigma(f)) = 0$.
\qed
\end{proof}

It is easy to extend the proof and show that the class of monomials of size at most $k$ also lead to weakly symmetric $\mathcal{F}$ functions.

\section{Discussion}
As mentioned in the introduction, the combination of evaluation and learning is a characteristic of property testing. A natural question is whether we can design an exact (i.e. non-distributional) property testing protocol that is useful. As we saw in Section \ref{S_Positive}, in the exact setting we are considering, whenever we will be able to positively test for membership, the learning problem will be hard and vice-versa. So, as compared to the commonly used property testing protocol (which is defined with respect to some distribution over the instance space), we can't expect two-sided property testers (that certify both membership and non-membership) to be combined with exact learners successfully. But, it is still possible to combine learners and algorithms that certify non-membership with potential applications to agnostic exact learning.

Regarding other future directions, one natural thing to study is a general upper bound on $aC^0$ that only depends on the size of the concept class. Moreover, as mentioned in the text, the lower bounds for $aC^0$ and $aTD$ use different tools to obtain a similar result, and these tools are often encountered in proofs for lower bounds, so perhaps understanding their connections would be beneficial in its own right.

Another interesting research direction is to study bounds for $C^0$ and $aC^0$ for particular concept classes. Several results exist (\cite{heg95} and \cite{hel96}) for $C^0$ but they do not cover all natural concept classes. Another hope is that deriving upper bounds for $C^0$ and $aC^0$ would in turn lead to a deeper understanding of the gap between the worst case upper bound for $aTD$ and the upper bounds for particular concept classes.

On another topic, as described in Section \ref{S_MQ}, interesting connections exist between the membership query learning and deterministic decision tree frameworks. One interesting direction would be to further investigate what other function classes lead to weakly symmetric $\mathcal{F}$ functions, as both positive and negative answers would potentially help in revealing new connections between learning and evaluation. 

\subsection*{Acknowledgments}
The author would like to thank Rocco Servedio, Michael Saks and Chris Mesterharm for their valuable comments and feedback.

{\small
\bibliographystyle{abbrv}
\bibliography{ExactLearning}
}
\end{document}